\documentclass[twocolumn]{autart}

\usepackage{amsmath}
\usepackage{amsfonts}

\newcommand{\tauE}[2]{\underset{\tau \sim #1}{\mathbb{E}}\left[ #2 \right]}

\usepackage{booktabs}
\usepackage{amssymb}
\usepackage{pifont}%
\usepackage{graphicx}
\usepackage{subfigure}
\usepackage{verbatim}
\usepackage{dsfont}
\usepackage{mathrsfs}
\usepackage{times}
\usepackage{algorithm}

\usepackage{algorithmic} 
\usepackage[hyphens]{url}
\usepackage{array}
\usepackage{color}

\usepackage{nicefrac}       
\usepackage[compress]{cite}

\newtheorem{theorem}{Theorem}
\newtheorem{definition}{Definition}
\newtheorem{lemma}{Lemma}
\newtheorem{assumption}{Assumption}

\newtheorem{remark}{Remark}
\newenvironment{proof}{{\noindent\it Proof}\quad}{\hfill $\square$\par}

\begin{document}
\begin{frontmatter}

\title{Reinforcement Learning for Control with Probabilistic Stability Guarantee: A Finite-Sample Approach
}

\author[lxz-a]{Minghao Han},  
\author[lxz-b]{Lixian Zhang}, 
\author[cll-a]{Chenliang Liu}, 
\author[wp-b]{Zhipeng Zhou}, 
\author[jw-c]{Jun Wang}, 
\author[wp-c]{Wei Pan}
\address[lxz-a]{Department of Control Science and Engineering, Harbin Institute of Technology, China.}
\address[lxz-b]{State Key Laboratory of Robotics 
	and Systems (HIT), Harbin Institute of Technology, China.}
\address[cll-a]{School of Automation, Central South University, China.}
\address[wp-b]{Department of Cognitive Robotics, Delft University of Technology, Netherlands.}
\address[jw-c]{Department of Computer Science, University College London, UK.}
\address[wp-c]{Department of Computer Science, University of Manchester, UK.}
\thanks{{This work was supported in part by National Natural Science Foundation of China (62225305,62527807,62503507), in part by the State Key Laboratory of Robotics and Systems (HIT) under Grant SKLRS-2025-KF-05.
}}
\thanks{Corresponding author: Lixian Zhang (Email: lixianzhang@hit.edu.cn).}


\begin{abstract}
This paper presents a novel approach to reinforcement learning (RL) for control systems that provides probabilistic stability guarantees using finite data. Leveraging Lyapunov's method, we propose a probabilistic stability theorem that ensures mean square stability using only a finite number of sampled trajectories. The probability of stability increases with the number and length of trajectories, converging to certainty as data size grows. Additionally, we derive a policy gradient theorem for stabilizing policy learning and develop an RL algorithm, L-REINFORCE, that extends the classical REINFORCE algorithm to stabilization problems. The effectiveness of L-REINFORCE is demonstrated through simulations on a Cartpole task, where it outperforms the baseline in ensuring stability. This work bridges a critical gap between RL and control theory, enabling stability analysis and controller design in a model-free framework with finite data.

\end{abstract}
\begin{keyword}
Reinforcement learning, finite sample, probabilistic bound, nonlinear control, Lyapunov's method.
\end{keyword}
\end{frontmatter}


\section{Introduction}

Reinforcement learning (RL) has achieved superior performance on some complicated control tasks that are challenging for traditional control engineering methods \cite{kaufmann2023champion, xia2025data, shi2025reinforcement}.
An optimal controller can be learned through trial and error  \cite{zhao2023linear}. However, without using a mathematical model of the system, {securing the stability of the closed-loop system still remains a challenging issue for the sample-based RL methods}.

The most effective and general approach for the stability analysis of a dynamical system is Lyapunov's method \cite{lyapunov1892general}. 
As a basic tool in control theory, the construction/learning of the Lyapunov function is not trivial, and many works are devoted to this problem. 
In~\cite{perkins2002lyapunov}, the RL agent controlled the switch between designed controllers using Lyapunov domain knowledge so that any policy is safe and reliable. \cite{petridis2006construction} proposes a straightforward approach for constructing the Lyapunov functions for nonlinear systems using neural networks. \cite{richards2018lyapunov} proposed a learning-based approach for constructing Lyapunov neural networks with the maximized region of attraction. However, these approaches explicitly require a model of the system dynamics. 
In the well-established control theory, the mathematical model of the dynamics plays an essential role in stability analysis and controller design.  However, in model-free learning methods, as the dynamic model is unknown, the Lyapunov condition has to be verified by trying out \emph{all} possible consecutive data pairs in the state space, which makes it impractical to directly exploit Lyapunov's method in a model-free framework. In \cite{berkenkamp2017safe}, local stability was analyzed by validating the ``energy decreasing'' condition on discretized points in the subset of state space with the help of a learned model, meaning that only a finite number of inequalities need to be checked. 
Nevertheless, the discretization technique may become infeasible as the dimension and space of interest increase, limiting its application to rather simple and low-dimensional systems. 

More recently, the concept of control with stability guarantee has been explored in the context of model-free RL. 
In~\cite{chow2018lyapunov}, a Lyapunov-based approach for solving constrained Markov decision processes was proposed with a novel way of constructing the Lyapunov function through linear programming. 
It should be noted that even though Lyapunov-based methods were adopted, the stability of the system was not addressed. 
In~\cite{postoyan2017stability}, an initial result was proposed for the stability analysis of deterministic nonlinear systems with an optimal controller for infinite-horizon discounted cost, based on the assumption that the discount is sufficiently close to $1$. However, in practice, it is rather difficult to guarantee the optimality of the learned policy unless certain assumptions on the system dynamics are made \cite{jiang2015global}. In \cite{huang2024specified}, a data-driven output tracking design for unknown linear discrete-time systems was investigated to achieve zero tracking error and user-specified convergence rate. It was shown in \cite{han2020actor} that the stability of Markov decision process (MDP) can be analyzed in a model-free and data-driven manner with the Lyapunov method, which is further incorporated with RL for stabilization tasks. {\cite{chang2019neural} proposed an algorithmic framework to learn control policies and neural network Lyapunov functions for the general class of deterministic nonlinear systems. \cite{chang2021stabilizing} further integrated the neural Lyapunov method proposed in \cite{chang2019neural} with model-free RL to learn stabilizing controllers based on data. \cite{ganai2023learning} proposed a novel model-free method to accelerate Learning from Observations based on the framework in \cite{chang2019neural} for the stabilization control problems in robotics. \cite{dawson2022safe} developed a model-based learning approach to synthesize robust feedback controllers based on a robust extension to the theory of control Lyapunov barrier functions. These papers represent an important stream of work in improving the reliability of learning-based control for complex nonlinear systems.}
However, the results above relied on an infinite amount of data to establish the stability guarantee, which is not practical in reality. As a result, how to analyze stability based on a finite amount of data remains an open question.

{
On the topic of finite sample-based RL for MDPs, many efforts have been made, and significant progress has been achieved. \cite{kearns1998finite} investigated the convergence rate of Q learning on MDPs with respect to the number of samples. \cite{lazaric2012finite} derived the performance bound and conducted the finite-sample analysis for least-squares temporal-difference learning of control policies. \cite{yang2018finite} analyzed the finite-sample performance of the batch actor-critic algorithm. The finite sample analysis of many more classical RL algorithms is referred to \cite{zou2019finite,chen2021finite,zhang2021sample,zhang2021finite}. Nonetheless, these results typically characterize the performance of the control policy in terms of the discounted sum of rewards given finite samples, while the notion of stability of the closed-loop system is missing and unaddressed. 

Another line of work focuses on the finite sample analysis of stochastic approximation (SA) \cite{robbins1951stochastic}, which forms the basis of many machine learning problems. 
\cite{chen2022finite} studied a nonlinear stochastic approximation algorithm under Markovian noise and established its finite-sample convergence bounds under various step sizes. The theoretical results were further used to derive the finite-sample bounds
of the Q-learning algorithm with linear function approximation \cite{chen2022finite}. A rich literature can be found on finite-sample analysis of the SA algorithm with application to RL of MDPs \cite{dalal2018finite,chen2020finite,huang2020stochastic}. However, these works focused on the convergence of the learning process with respect to the number of samples, and the convergence of the states was not considered.
The finite-sample analysis for the identification of nonlinear stochastic systems is also of interest for statistical learning \cite{tsiamis2019finite,tsiamis2023statistical}, though the topic of stability of the system was not a focus. 
The stability of Markov chains, along with the necessary and sufficient conditions, has been long investigated \cite{meyn2012markov,khasminskii2012stochastic}. Nevertheless, a finite-sample-based stability analysis for the MDPs is still missing. 
}

In this paper, we show that the mean square stability of the system can be analyzed based on a finite number of samples without knowing the model of the system. { Furthermore, we show how the analysis enables the development of an RL algorithm that learns a stabilizing control policy.} The contributions of this paper are summarized as follows:
\begin{enumerate}
    \item Instead of using infinite sample pairs, given a number of $M$ trajectories of $T$ steps in length, a stability theorem is proposed to provide a probabilistic stability guarantee for the system. The probability is an increasing function of the number $M$ and length $T$ of sampled trajectories, and it converges to $1$ as $M$ and $T$ grow. 
    \item As an independent interest, we also derive the policy gradient theorem for learning a stabilizing policy with sample pairs and the corresponding algorithm. We further reveal that the classic REINFORCE algorithm \cite{williams1992simple} is a special case of the proposed algorithm for the stabilization problem.
\end{enumerate}

The paper is organized as follows: In Section~\ref{sec:3}, the problem statement and the preliminary results are given. In Section~\ref{sec:5}, we propose the probabilistic stability guarantee when only a finite number of samples are accessible and the probabilistic bound is derived. In Section~\ref{sec:6}, based on the theoretical results, the policy gradient is derived and a model-free RL algorithm (L-REINFORCE) is given. 
In Section~\ref{sec:experiment}, the vanilla version of L-REINFORCE is tested on a simulated Cartpole stabilization task to demonstrate its effectiveness.

\section{Preliminaries} \label{sec:3}

\subsection{Problem statement}\label{sec:3-1}

In this work, we focus on the control of stochastic and nonlinear systems characterized by the following Markov decision process (MDP), 
\begin{equation}
    s_{t+1} \sim P(s_{t+1}|s_{t},a_{t}), \forall t\in \mathbb{Z}_+ \label{intro:system}
\end{equation}
where $s\in \mathcal{S}\subseteq \mathbb{R}^n$ denotes the state and $a\in \mathcal{A}$ denotes the action. {The state space $\mathcal{S}$ is continuous and the action space $\mathcal{A}$ is finite.} $P(s_{t+1}|s_{t},a_{t})$ denotes the state transition probability density function.  {The control policy $\pi(a|s)$ is the probability of taking the action $a\in\mathcal{A}$ given the state $s\in\mathcal{S}$.} {The distribution of the initial state $s_1\in\mathcal{S}$, is characterized by the probability density function $\rho (s)$.}

The mean square stability (MSS) of the stochastic systems described in \eqref{intro:system} is defined as follows.
\begin{definition}\cite{bolzern2010markov}
\label{def:mss}
The stochastic system \eqref{intro:system} is said to be mean square stable (MSS) if there exists a positive constant $b$ such that $\lim_{t\rightarrow \infty }\mathbb{E}_{s_{t}} \Vert s_{t}\Vert^2_2=0$ holds for any initial condition $s_{1}\in \{s_{1}|\Vert s_{1}\Vert^2_2\leq b\}$. If $b$ is arbitrarily large then the stochastic system is globally mean square stable (GMSS). 
\end{definition}

{The aim of this paper is twofold. First, we aim to conduct a finite sample stability analysis for the system \eqref{intro:system} and derive the probabilistic lower bound.} This probabilistic lower bound goes to 1 as the amount of data approaches infinity. 

{Second, we aim to propose an RL algorithm to learn a stabilizing control policy. The goal of the RL algorithm is to find a control policy $\pi$ such that the system in (1) is MSS with a probability lower bounded by $\delta$.
The mathematical formulation is given as follows,
\begin{align}
    \text{find }& \pi, \text{ s.t. } \mathbb{P}\left(\lim_{t\rightarrow \infty }\mathbb{E}_{s_{t}} \Vert s_{t}\Vert^2_2=0\right) \geq\delta
\end{align}}

Before proceeding, some notations are to be defined. We introduce $c(s):= \min(\Vert s\Vert_2^2, \overline{c}), \overline{c}>0$ to denote the clipped norm of state. The closed-loop transition probability density function is denoted as $P_\pi(s'|s):=\mathbb{E}_{a\sim\pi}P(s'|s,a)$. We also introduce the closed-loop state distribution at a certain instant $t$ as $P(s|\rho, \pi, t)$, which could be defined iteratively: $P(s'|\rho, \pi, t+1) =\int_\mathcal{S} P_\pi(s'|s)P(s|\rho, \pi, t)\mathrm{d}s, \forall t\in\mathbb{Z}_{[1,\infty)}$ and $P(s|\rho, \pi, 1) =\rho(s)$.

\subsection{Sample-based stability analysis}
\label{sec:3-1-2}

In this subsection, we will introduce the results of  stability analysis based on an infinite amount of data. First, the following assumption is commonly exploited in RL literature \cite{sutton2009convergent}.

\begin{assumption}\label{assumption: stationary distribution assumption}
There exists a unique stationary distribution $q_\pi(s)=\lim_{t\rightarrow\infty}P(s|\rho,\pi,t)$, { for the MDP in \eqref{intro:system} under the policy $\pi$ and initial state distribution $\rho$}.
\end{assumption}

{A further assumption on the initial state distributions is needed for the sample-based stability analysis theorem \cite{han2020actor}.}

\begin{assumption}\label{initial state assumption}
There exists a positive constant $b$ such that for any $s\in\{s|c(s)\leq b\}$, {the initial state probability density function $\rho(s)> 0$}.
\end{assumption}

{
\begin{remark}
    Assumption~\ref{assumption: stationary distribution assumption} essentially assumes that there exists a stationary state distribution $q_\pi$, which is commonly assumed in the RL literature \cite{zou2019finite,chen2021finite,zhang2021sample,zhang2021finite,sutton2009convergent}. In this work, the assumption is made on the pair consisting of the MDP and the policy $\pi$. We do not require that Assumption~\ref{assumption: stationary distribution assumption} hold for any control policy. Assumption~\ref{initial state assumption} requires that there exists a sublevel set specified by $b$, where the probability density function of the initial state distribution $\rho (s)>0$ for any state. According to \cite{han2020actor}, this assumption is to ensure that the states in the region of attraction are accessible in a data-based stability theorem. If $\rho(s)>0$ for any $s\in\mathcal{S}$, then $b$ can be an arbitrary positive value.
\end{remark}
}

Based on the above assumptions, one can exploit Lyapunov's method to prove the sample-based stability theorem. 
In this paper, we construct the Lyapunov function using the following parameterization,
\begin{equation}
    L(s) = (f_\phi(s) - f_\phi(0))^2 + \sigma c(s) \label{eq:lyapunov network}
\end{equation}
where $f_\phi(s)$ is a fully connected neural network (NN) with ReLU activation function. $\phi$ denotes the parameters of the network and $\sigma$ is a small positive constant. {We introduce the following sample-based stability analysis lemma from \cite{han2020actor}.}

\begin{lemma}\cite{han2020actor}\label{them:MSS}
{Assume that Assumptions~\ref{assumption: stationary distribution assumption}-\ref{initial state assumption} hold for system \eqref{intro:system}.} The stochastic system (\ref{intro:system}) is mean square stable if there exists a function $L:\mathcal{S}\rightarrow \mathbb{R}_{+}$\ and positive constants $\alpha _{1}$, $\alpha _{2}$ and $\alpha_{3}$, such that%
\begin{align}
\begin{split}\label{Theorem 2-1}
\ \ \ \ \ \ \ \ \ \ \ \ \ \ \ \alpha_{1}c\left( s\right) \leq L(s)\leq \alpha _{2}c\left( s\right)
\end{split}\\
\begin{split}\label{Theorem 2-2}
\mathbb{E}_{s\sim \mu_\pi }(\mathbb{E}_{s^{\prime }\sim P_{\pi }}L(s^{\prime
})-L(s) + \alpha_{3}c\left( s\right))\leq 0
\end{split}
\end{align}
where 
\begin{equation}
\mu_\pi(s):= \lim_{T\rightarrow\infty}\frac{1}{T}\sum_{t=1}^{T} P(s|\rho,\pi,t)
\end{equation}
is the infinite sampling distribution. 
\end{lemma}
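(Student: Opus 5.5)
The plan is to turn the averaged decrease condition \eqref{Theorem 2-2} into a statement about the Ces\`aro average of $\mathbb{E}\,c(s_t)$. From that we will extract convergence of $\mathbb{E}\,c(s_t)$ via Assumption~\ref{assumption: stationary distribution assumption}, and finally pass from the clipped cost $c$ back to $\Vert s\Vert_2^2$ using Assumption~\ref{initial state assumption}.

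\emph{Step 1: bounded interchange of limits.} Since $c\le\overline{c}$, the bound \eqref{Theorem 2-1} gives $0\le L\le\alpha_2\overline{c}$, so all integrands are bounded. By dominated convergence, the limit defining $\mu_\pi$ can be exchanged with the integrals in \eqref{Theorem 2-2}. Writing $\mathbb{E}_t[\cdot]$ for expectation under $P(\cdot|\rho,\pi,t)$, we have $\mathbb{E}_{s\sim P(\cdot|\rho,\pi,t)}\mathbb{E}_{s'\sim P_\pi}L(s')=\mathbb{E}_{t+1}L$. Condition \eqref{Theorem 2-2} therefore reads
\[
\lim_{T\to\infty}\frac{1}{T}\sum_{t=1}^{T}\left(\mathbb{E}_{t+1}L-\mathbb{E}_{t}L+\alpha_3\mathbb{E}_t c\right)\le 0 .
\]
The sum of the $L$ terms telescopes to $(\mathbb{E}_{T+1}L-\mathbb{E}_1L)/T$. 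This is bounded in absolute value by $\alpha_2\overline{c}/T\to 0$. Hence
\[
\lim_{T\to\infty}\frac{1}{T}\sum_{t=1}^{T}\mathbb{E}_t c(s_t)\le 0 .
\]
Since $c\ge 0$, this Ces\`aro limit equals zero.

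\emph{Step 2: from Ces\`aro to pointwise convergence.} By Assumption~\ref{assumption: stationary distribution assumption}, $P(\cdot|\rho,\pi,t)\to q_\pi$. Because $c$ is bounded, $\mathbb{E}_t c\to\mathbb{E}_{q_\pi}c$; I would take this convergence in a sense strong enough for bounded test functions, e.g.\ via Scheff\'e's lemma for densities. A convergent sequence has the same Ces\`aro limit, so $\mathbb{E}_{q_\pi}c=0$ and $\lim_{t\to\infty}\mathbb{E}_t c(s_t)=0$. In particular $q_\pi$ is concentrated at the origin.

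\emph{Step 3: removing the clipping and handling initial conditions.} By Markov's inequality, $\mathbb{P}(\Vert s_t\Vert^2\ge\overline{c})\le\mathbb{E}_t c/\overline{c}\to 0$. Moreover $\mathbb{E}_t[\Vert s_t\Vert^2\mathbf{1}\{\Vert s_t\Vert^2<\overline{c}\}]\le\mathbb{E}_t c\to0$.

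To get $\mathbb{E}\Vert s_t\Vert^2\to 0$ we must also control the tail contribution. I would use the decrease of $L$ along trajectories together with $\alpha_1 c\le L$: these give a uniform bound that keeps trajectories started in $\{c(s)\le b\}$ inside a sublevel set of $L$ with high probability. We also need the convergence for every initial condition in the ball, not just on average over $\rho$. Assumption~\ref{initial state assumption} gives $\rho>0$ on $\{c(s)\le b\}$. Consequently, if the conclusion failed on a set of initial states of positive measure, the $\rho$-average $\mathbb{E}_t c$ could not vanish. The argument will be that the conditional laws given $s_1$ are dominated by $P(\cdot|\rho,\pi,t)$ after weighting by $\rho(s_1)>0$.

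\emph{Main obstacle.} Step 3 is where the real difficulty lies. The passage from $\rho$-almost every initial state to every state in the ball, and from the clipped $c$ to the unclipped squared norm, both need extra regularity, such as continuity of $P_\pi$ in $s$ and uniform integrability of $\Vert s_t\Vert^2$. My first attempt would be to choose $\overline{c}$ large enough that, by the Lyapunov sublevel-set argument, trajectories started in $\{c\le b\}$ rarely leave $\{\Vert s\Vert^2<\overline{c}\}$, so that $c=\Vert s\Vert^2$ on the relevant region. If the authors' argument is looser, I would accept MSS in the sense $\lim_t\mathbb{E}_t c(s_t)=0$ for $\rho$-almost every $s_1$ in the ball, following \cite{han2020actor}.
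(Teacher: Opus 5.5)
The paper gives no proof of this lemma; it imports it from \cite{han2020actor}. Your Steps~1--2, together with the appeal to $\rho>0$ in Step~3, follow that argument: telescoping under the sampling distribution, the Abelian-theorem identification $\mu_\pi=q_\pi$ (which the paper reuses in proving Lemma~\ref{lemma:finite time estimation bound}), then positivity of $\rho$ on the sublevel set. Steps~1--2 are correct once the convergence in Assumption~\ref{assumption: stationary distribution assumption} is read in total variation, which is what lets bounded integrands pass to the limit. Read literally with densities, the hypotheses are vacuous, since they force $q_\pi=\delta_0$. Steps~1--2 yield $\lim_t\mathbb{E}\,c(s_t)=0$, the mean-cost form in which \cite{han2020actor} states the result.

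\textbf{Step 3 is a genuine gap, and it cannot be closed, because the lemma as stated here is false.} Since $\mu_\pi=q_\pi$ is stationary and $L$ is bounded, $\mathbb{E}_{q_\pi}\bigl(\mathbb{E}_{s'\sim P_\pi}L(s')-L(s)\bigr)=0$. So \eqref{Theorem 2-2} is equivalent to $\mathbb{E}_{q_\pi}c=0$, i.e.\ to $q_\pi=\delta_0$ with the origin absorbing.
\begin{itemize}
\item The hypothesis therefore carries no decrease of $L$ along trajectories for your sublevel-set repair to use.
\item Even a pointwise supermartingale bound on a function $L\le\alpha_2\overline{c}$ would only give high-probability confinement. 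It would not control $\mathbb{E}[\Vert s_t\Vert_2^2\mathbf{1}\{\Vert s_t\Vert_2^2\ge\overline{c}\}]$.
\end{itemize}

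Concretely, take $\mathcal{S}=\mathbb{R}$ with action-independent transitions: $0$ is absorbing; every $s\in[-1,1]\setminus\{0\}$ moves to $2$; an integer $k\ge2$ moves to $k+1$ with probability $(k-1)/k$ and to $0$ otherwise; all other states move to $0$. Let $\rho$ be uniform on $[-1,1]$. Then:
\begin{itemize}
\item $\Vert P(\cdot|\rho,\pi,t)-\delta_0\Vert_1=2/(t-1)$ for $t\ge2$;
\item Assumptions~\ref{assumption: stationary distribution assumption}--\ref{initial state assumption} hold with $b<\min\{1,\overline{c}\}$;
\item $L=c$ satisfies \eqref{Theorem 2-1}--\eqref{Theorem 2-2} for every $\overline{c}$ and $\alpha_3$.
\end{itemize}
Yet from every $s_1\in[-1,1]\setminus\{0\}$ one has $\mathbb{E}\Vert s_t\Vert_2^2=t^2/(t-1)\to\infty$. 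So the system is not MSS in the sense of Definition~\ref{def:mss}, for any $b>0$.

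\textbf{The transfer to individual initial states also fails as you argue it.} Write $g_t(s_1):=\mathbb{E}[c(s_t)\mid s_1]$. From $\int\rho\,g_t\to0$ you only get convergence in $\rho$-measure. So the inference ``failure on a positive-measure set would keep the average from vanishing'' does not follow. Reaching every $s_1$ in the ball is impossible even for $c$: let all states go to $0$ except $s^\ast\mapsto2\mapsto3\mapsto\cdots$ for one $\rho$-null point $s^\ast\neq0$ of the ball.

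What rescues your fallback is monotonicity from the absorbing origin:
\begin{itemize}
\item $g_t(s_1)\le\overline{c}\,\mathbb{P}(s_t\neq0\mid s_1)$;
\item the right-hand side is nonincreasing in $t$;
\item its $\rho$-average is $\tfrac12\Vert P(\cdot|\rho,\pi,t)-\delta_0\Vert_1\to0$.
\end{itemize}
Monotone convergence then gives $g_t(s_1)\to0$ for $\rho$-a.e.\ $s_1$. That clipped, almost-everywhere statement is what the hypotheses actually support.
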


It can be found that in {Lemma}~\ref{them:MSS}, the infinite number of energy-decreasing conditions are replaced by only a single sample-based inequality (\ref{Theorem 2-2}). However, the validation of stability through a sample-based approach comes with a cost: it theoretically requires a tremendous, if not infinite, number of samples to thoroughly estimate the state distributions at instants from $0$ to infinity, which is impractical.

\section{Finite-sample stability analysis}
\label{sec:5}

In this section, we will show that a finite number of samples should be informative enough to guarantee stability with a certain probability. More specifically, the probabilistic stability bound will be given by closing the gap between infinite and finite-sample guarantees.

To estimate the $\mu_\pi$ in Lemma~\ref{them:MSS}, an infinite number of trajectories of infinite time steps are needed, whereas in practice only $M$ trajectories of $T$ time steps are accessible. Thus, in this section, we will first introduce the finite-time sampling distribution (FSD) $\mu_\pi^T := \frac{1}{T}\sum_{t=1}^{T} P(s|\rho,\pi,t)$, as an intermediate to study the effect of the finite sample-based estimation.
Apparently, $\lim_{T\rightarrow\infty}\mu_\pi^T=\mu_\pi$.

The general idea of exploiting $\mu_\pi^T$ is: we first derive the  deviation of $\mathbb{E}_{\mu_\pi^T}\Delta L(s)$ from $\mathbb{E}_{\mu_\pi }\Delta L(s)$ with respect to $T$, where 
\begin{equation}
    \Delta L(s) := \mathbb{E}_{s'\sim P_\pi}L(s')-L(s)+\alpha_3c(s)
\end{equation}
then we study the effect of estimating $\mathbb{E}_{\mu_\pi^T}\Delta L(s)$ with sample average and derive the probabilistic bound. Finally, the above effects are unified to propose the finite sample-based stability guarantee.

Now, we first close the first gap in terms of deviation between $\mathbb{E}_{\mu_\pi^T}\Delta L(s)$ and $\mathbb{E}_{\mu_\pi }\Delta L(s)$. To quantitatively analyze this effect, we introduce the following assumption.

\begin{assumption}\label{assumption:uniform ergodic}
 There exists a constant $\gamma\in(0,1)$ such that for any $\pi$
\begin{equation}
    \sum\limits_{t=1}^{T}\|P(s|\pi,\rho,t)-q_{\pi}(s)\|_{1}\leq 2 T^{\gamma},\forall ~ T\in\mathbb{Z}_+
\end{equation}
where $\|P(s|\pi,\rho,t)-q_{\pi}(s)\|_{1}$ denotes the $L_1$-distance between probability measures $P$ and $q_\pi$.
\end{assumption}

\begin{remark}
It should be noted that the Assumption~\ref{assumption:uniform ergodic} can be satisfied for ergodic MDPs. $q_\pi$ denotes the stationary state distribution, it naturally follows that $\sum\limits_{t=1}^{T}\Vert P(s|\pi,\rho,t)-q_{\pi} (s)\Vert_{1}\leq2 T^{\gamma(T)}\leq2 T$, where $\gamma(T)\in [0,1]$ without any further assumption. The assumption proposed replaces this time-varying $\gamma(T)$ with a constant. 
{Many existing methods assume that the state distribution converges to $q_\pi$ exponentially with respect to time \cite{chen2022finite,zou2019finite}, and this assumption holds for irreducible and aperiodic Markov chains \cite{levin2017markov}. In our derivation, we do not need the state distribution to uniformly converge to $q_\pi$ at exponential speed, and therefore relaxed the assumption on the convergence rate. }
Nevertheless, Assumption~\ref{assumption:uniform ergodic} allows us to give the quantitative bound for the deviation between $\mathbb{E}_{\mu_\pi }\Delta L(s)$ and $\mathbb{E}_{\mu_\pi^T}\Delta L(s)$ with respect to $T$. 
\end{remark}

Based on Assumption~\ref{assumption:uniform ergodic}, we introduce the following Lemma.
\begin{lemma}\label{lemma:finite time estimation bound}
{Assume that Assumptions~\ref{assumption: stationary distribution assumption} and \ref{assumption:uniform ergodic} hold for system \eqref{intro:system}.} If there exist positive constants $\alpha _{1}$, $\alpha _{2}$ such that (\ref{Theorem 2-1}) hold, then
\begin{equation}\label{eq:delta L bound}
    \left|\mathbb{E}_{\mu_\pi}\Delta L(s)-\mathbb{E}_{\mu_\pi^T}\Delta L(s)\right|
    \leq  2\overline{c}(\alpha_3+\alpha_2)T^{\gamma-1}
\end{equation}

\end{lemma}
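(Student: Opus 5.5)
The plan is to bound $\Delta L$ uniformly and then use the $L_1$ closeness of the distributions guaranteed by Assumption~\ref{assumption:uniform ergodic}.

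\textbf{Step 1: a uniform bound on $\Delta L$.} Since $c(s)=\min(\Vert s\Vert_2^2,\overline{c})\in[0,\overline{c}]$, condition (\ref{Theorem 2-1}) gives $0\le L(s)\le \alpha_2\overline{c}$ for every $s$. Hence
\begin{equation*}
-\alpha_2\overline{c}\;\le\;\Delta L(s)=\mathbb{E}_{s'\sim P_\pi}L(s')-L(s)+\alpha_3 c(s)\;\le\;(\alpha_2+\alpha_3)\overline{c},
\end{equation*}
so that $\sup_s|\Delta L(s)|\le(\alpha_2+\alpha_3)\overline{c}$. This is the only place where (\ref{Theorem 2-1}) and the clipping of $c$ enter.

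\textbf{Step 2: identify $\mu_\pi$ with $q_\pi$.} By Assumption~\ref{assumption: stationary distribution assumption}, $P(s|\rho,\pi,t)\to q_\pi$. Assumption~\ref{assumption:uniform ergodic} further gives
\begin{equation*}
\Big\Vert \tfrac1T\sum_{t=1}^T P(\cdot|\rho,\pi,t)-q_\pi\Big\Vert_1\le 2T^{\gamma-1}\to0 .
\end{equation*}
The Cesàro limit $\mu_\pi$ therefore equals $q_\pi$, and $\mathbb{E}_{\mu_\pi}\Delta L=\mathbb{E}_{q_\pi}\Delta L$. The limit exchange is justified by the boundedness of $\Delta L$ from Step 1.

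\textbf{Step 3: bound the difference of expectations.} Write
\begin{equation*}
\mathbb{E}_{\mu_\pi}\Delta L-\mathbb{E}_{\mu_\pi^T}\Delta L=\frac1T\sum_{t=1}^T\int_{\mathcal S}\Delta L(s)\big(q_\pi(s)-P(s|\rho,\pi,t)\big)\,\mathrm{d}s .
\end{equation*}
Applying Hölder's inequality to each term bounds its absolute value by $\sup|\Delta L|\cdot\Vert P(\cdot|\rho,\pi,t)-q_\pi\Vert_1$. Summing and using Assumption~\ref{assumption:uniform ergodic} gives
\begin{equation*}
\tfrac1T\,(\alpha_2+\alpha_3)\overline{c}\cdot 2T^{\gamma}=2\overline{c}(\alpha_3+\alpha_2)T^{\gamma-1},
\end{equation*}
which is the claimed bound.

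\textbf{Main obstacle.} The delicate point is Step 2, namely justifying $\mu_\pi=q_\pi$ so that the infinite-horizon quantity can be compared with $q_\pi$ directly. If one preferred not to rely on that identification, the alternative is to compare $\mu_\pi^T$ with $\mu_\pi^{T'}$ for $T'\to\infty$. The cross term then scales like $T'^{\gamma-1}$ and vanishes, which leaves the same bound. The constant is slightly loose, since the true oscillation range of $\Delta L$ would permit a factor of $1$ instead of $2$, but the stated constant is what Hölder gives directly.
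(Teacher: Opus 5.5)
Your argument is correct and is essentially the paper's proof: you write the difference as $\frac1T\sum_t\int(\mu_\pi-P(\cdot|\rho,\pi,t))\Delta L$, use $\Vert\Delta L\Vert_\infty\le(\alpha_2+\alpha_3)\overline{c}$, identify $\mu_\pi=q_\pi$, and apply Assumption~\ref{assumption:uniform ergodic}; your two-sided bound on $\Delta L$ and your derivation of $\mu_\pi=q_\pi$ from $\Vert\mu_\pi^T-q_\pi\Vert_1\le 2T^{\gamma-1}$ are slightly cleaner than the paper's one-sided bound and its appeal to the Abelian theorem. One small correction to your closing aside: centering $\Delta L$ at the midpoint of $[-\alpha_2\overline{c},(\alpha_2+\alpha_3)\overline{c}]$ improves the bound only to $(2\alpha_2+\alpha_3)\overline{c}\,T^{\gamma-1}$, not by a full factor of two.
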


\begin{proof}
First of all, we can quantify the variation of  $\mathbb{E}_{\mu_\pi}\Delta L(s) -\mathbb{E}_{\mu_\pi^T}\Delta L(s)$  with respect to $T$.
\begin{align}
    & \mathbb{E}_{\mu_\pi}\Delta L(s)-\mathbb{E}_{\mu_\pi^T}\Delta L(s)\notag\\
    =&\int_{\mathcal{S}}\left(\mu_\pi(s)-\frac{1}{T}\sum_{t=1}^T P(s|\rho,\pi,t)\right)\Delta L(s)\text{d}s\\
    =&\frac{1}{T}\sum_{t=1}^T\int_{\mathcal{S}}\left(\mu_\pi(s)- P(s|\rho,\pi,t)\right)\Delta L(s)\text{d}s
\end{align}

Therefore, it follows that 
\begin{equation}
\begin{aligned}
    &\left|\mathbb{E}_{\mu_\pi}\Delta L(s)-\mathbb{E}_{\mu_\pi^T}\Delta L(s)\right| \\
    \leq &\frac{1}{T}\sum_{t=1}^T\left \Vert\mu_\pi(s)- P(s|\rho,\pi,t)\right\Vert_1\left \Vert\Delta L(s)\right\Vert_\infty
\end{aligned}
\end{equation}
where $\left \Vert\mu_\pi(s)- P(s|\rho,\pi,t)\right\Vert_1$ is $L_1$-distance between $\mu_\pi$ and $P(s|\rho,\pi,t)$, which is bounded by $2$. $\left \Vert\Delta L(s)\right\Vert_\infty $ is the upper bound of $\Delta L$, according to (\ref{Theorem 2-1}), $$\Delta L(s) =  \mathbb{E}_{s'\sim P_\pi}L(s')-L(s)+\alpha_3c(s) \leq \alpha_2 \overline{c} - 0 + \alpha_3 \overline{c}$$
Thus we have
\begin{equation}
    \left \Vert\Delta L(s)\right\Vert_\infty \leq (\alpha_3+\alpha_2)\overline{c}
\end{equation}
{According to Assumption~\ref{assumption: stationary distribution assumption}, the sequence $\{P(s|\rho,\pi,t), t\in\mathbb{Z}_+\}$ converges to $q_\pi(s)$ as $t$ approaches $\infty$, then by the Abelian theorem, the sequence $\{\frac{1}{T}\sum_{t=1}^T P(s|\rho,\pi,t), T\in\mathbb{Z}_+\}$ also converges to $q_\pi(s)$. Therefore, $\mu_\pi(s) = q_\pi(s)$.} At last, one obtains the bound in (\ref{eq:delta L bound}),
\begin{align}
    &\left|\mathbb{E}_{\mu_\pi}\Delta L(s)-\mathbb{E}_{\mu_\pi^T}\Delta L(s)\right|\notag\\
    \leq & \frac{1}{T}\sum_{t=1}^T\left \Vert\mu_\pi(s)- P(s|\rho,\pi,t)\right\Vert_1(\alpha_3+\alpha_2)\overline{c}\\
    \leq& 2T^{\gamma-1}(\alpha_3+\alpha_2)\overline{c} \label{eq:Lemma 2 final}
\end{align}
where {(\ref{eq:Lemma 2 final}) is obtained by using Assumption~\ref{assumption:uniform ergodic}}.
\end{proof}

In the following, we will derive the probabilistic bound on estimating $\mathbb{E}_{\mu_\pi^T}\Delta L(s)$ with $M$ trajectories of $T$ steps. It is worth mentioning that since $M$ trajectories are independent of each other, each trajectory as a whole is applicable for the estimation of $\Delta L(s)$ under $\mu_\pi^T$. This will be demonstrated in the following lemma, where increasing $M$ is desirable for the reduction of estimation deviation, while $T$ does not affect the probabilistic bound.
\begin{lemma}\label{lemma:initial bound}
Let $M$ denote the number of trajectories and $T$ denote the length of trajectories. If there exist positive constants $\alpha _{1}$, $\alpha _{2}$ such that (\ref{Theorem 2-1}) hold, then $\forall\beta\geq0$, the probability that  
\begin{align}\label{eq: lemma 2 eq 1}
    &\frac{1}{MT}\sum_{t=1}^{T}\sum_{m=1}^{M}\Delta L(s_{t,m})
    \leq \mathbb{E}_{\mu_\pi^T} \Delta L(s)
-\beta \vphantom{\frac{1}{MT}\sum_{t=1}^{T}\sum_{m=1}^{M}}
\end{align}
is upper bounded by 
\begin{equation}\label{eq:finite sample bound eq}
    \exp \left(- \frac{2M\beta^2}{(2\alpha_2+\alpha_3)^2\overline{c}^2}\right)
\end{equation}
where 
\begin{equation}
    \Delta L(s_{t,m}) := L(s_{t+1,m})-L(s_{t,m})+\alpha_3c\left(s_{t,m}\right)
\end{equation}
and $s_{t,m}$ denotes the sampled state in the $m$-th trajectory at time $t$.
\end{lemma}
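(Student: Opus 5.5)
The approach is Hoeffding's inequality applied across trajectories, treating each whole trajectory as a single random variable. For the $m$-th trajectory define
\[
X_m := \frac{1}{T}\sum_{t=1}^{T}\Delta L(s_{t,m}),
\]
so that the left-hand side of (\ref{eq: lemma 2 eq 1}) is $\frac{1}{M}\sum_{m=1}^M X_m$. The $M$ trajectories are generated independently from $\rho$ under the same closed-loop kernel $P_\pi$, so $X_1,\dots,X_M$ are i.i.d. This is why only $M$ enters the bound: dependence within a trajectory is irrelevant once the trajectory is averaged into one scalar.

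The first step is to identify the mean of $X_m$. The marginal law of $s_{t,m}$ is $P(s|\rho,\pi,t)$, and conditioning on $s_{t,m}$ gives $\mathbb{E}[L(s_{t+1,m})|s_{t,m}] = \mathbb{E}_{s'\sim P_\pi(\cdot|s_{t,m})}L(s')$. Hence the sampled quantity $L(s_{t+1,m})-L(s_{t,m})+\alpha_3 c(s_{t,m})$ has expectation $\int P(s|\rho,\pi,t)\,\Delta L(s)\,\mathrm{d}s$. Averaging over $t$ yields
\[
\mathbb{E}X_m = \int \mu_\pi^T(s)\Delta L(s)\,\mathrm{d}s = \mathbb{E}_{\mu_\pi^T}\Delta L(s),
\]
by the definition of $\mu_\pi^T$.

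The second step is to show that $X_m$ is almost surely bounded in an interval of length $(2\alpha_2+\alpha_3)\overline{c}$. By (\ref{Theorem 2-1}) and $0\le c\le \overline{c}$, we have $0\le L\le \alpha_2\overline{c}$. Each single-step term therefore satisfies
\[
-\alpha_2\overline{c}\le L(s_{t+1,m})-L(s_{t,m})+\alpha_3 c(s_{t,m})\le \alpha_2\overline{c}+\alpha_3\overline{c}.
\]
The range is $(2\alpha_2+\alpha_3)\overline{c}$, and the average $X_m$ lies in the same interval.

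The third step applies the one-sided Hoeffding inequality to the i.i.d. bounded variables $X_m$ with ranges $b_m-a_m=(2\alpha_2+\alpha_3)\overline{c}$:
\[
\mathbb{P}\Big(\tfrac1M\sum_m X_m - \mathbb{E}X_m\le -\beta\Big)\le \exp\Big(-\tfrac{2M^2\beta^2}{M(2\alpha_2+\alpha_3)^2\overline{c}^2}\Big).
\]
This is exactly (\ref{eq:finite sample bound eq}). The case $\beta=0$ is trivial, since the right-hand side is then $1$.

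The main subtlety is the first step. One must be careful that the sample term uses the realized next state $s_{t+1,m}$, whereas $\Delta L$ uses the conditional expectation $\mathbb{E}_{s'\sim P_\pi}L(s')$, so the tower property is needed to identify the mean. One must also note that the next-state sample $s_{T+1,m}$ must be available. Everything else is a direct application of Hoeffding's inequality.
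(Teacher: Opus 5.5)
Your proposal is correct and follows the paper's argument. Both proofs treat each trajectory's time-average as one bounded variable and identify its mean as $\mathbb{E}_{\mu_\pi^T}\Delta L(s)$ through the marginals $P(s|\rho,\pi,t)$. Both bound its range by $(2\alpha_2+\alpha_3)\overline{c}$ using $L(s_{t+1,m})\in[0,\alpha_2\overline{c}]$ and $\alpha_3c(s_{t,m})-L(s_{t,m})\in[-\alpha_2\overline{c},\alpha_3\overline{c}]$, then apply one-sided Hoeffding over the $M$ independent trajectories. Your version is slightly more explicit than the paper's: it names the per-trajectory variable $X_m$ and spells out the tower-property step that replaces the realized $L(s_{t+1,m})$ by $\mathbb{E}_{s'\sim P_\pi}L(s')$.
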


\begin{proof}
The high-level plan is to apply Hoeffding's inequality to establish the probabilistic bound. First, Hoeffding's inequality holds under the condition that random variables follow the same distribution and are independent of each other. {On the right hand side of \eqref{eq: lemma 2 eq 1}, }
\begin{align}
&\mathbb{E}_{\mu_{\pi}^{T}}\Delta L(s)\notag\\
=&\int_{\mathcal{S}}\mu_{\pi}^{T}(s)\Delta L(s)ds\\
=&\int_{\mathcal{S}}\frac{1}{T}\sum\limits_{t=1}^{T}P(s|\pi,\rho,t)\Big( \mathbb{E}_{s'\sim P_\pi}L(s')\notag\\
&-L(s)+\alpha_3c(s)\Big)ds\\
=&\frac{1}{T}\sum\limits_{t=1}^{T}\Bigg(\int_{\mathcal{S}}P(s|\pi,\rho,t+1)L(s)ds\notag\\
&+\int_{\mathcal{S}}P(s|\pi,\rho,t)(-L(s)+\alpha_3c(s))ds\Bigg)
\end{align}
\noindent {Let's denote the probability that the inequality \eqref{eq: lemma 2 eq 1} holds as $\delta$}
\begin{equation}
\begin{aligned} 
\delta := 
    & \mathbb{P}\Bigg( \frac{1}{M}\sum_{m=1}^{M}\sum_{t=1}^{T}\Big(\frac{L(s_{t+1,m})}{T}-\frac{L(s_{t,m})}{T} \\
    &\ \ \ \ \ \ \ +\frac{\alpha_3c(s_{t,m})}{T}\Big) - \mathbb{E}_{\mu_\pi^T} \Delta L(s)\leq -\beta \vphantom{\frac{1}{MOT}\sum_{t=1}^{T}\sum_{m=1}^{M}\sum_{o=1}^{O}} \Bigg)
\end{aligned} 
\end{equation}
which contains a number of $M$ random sequences of variables $L(s_{t+1,m})\in [0,\alpha_2\overline{c}]$ and $\alpha_3c(s_{t,m})-L(s_{t,m})\in [-\alpha_2\overline{c},\alpha_3\overline{c}]$. 
Therefore, by Hoeffding's inequality, the following inequality holds $\forall\beta\geq0$,
\begin{align}
\delta &\leq \exp \left(- \frac{2M^2\beta^2}{M(2\alpha_2+\alpha_3)^2\overline{c}^2}\right)\\
&=\exp \left(- \frac{2M\beta^2}{(2\alpha_2+\alpha_3)^2\overline{c}^2}\right)
\end{align}
which concludes the proof. 
\end{proof}

{In Lemma 3, $\beta$ is a positive constant that quantifies the deviation between the sample average of $\Delta L(s)$ and its expectation. As $\beta$ increases, the probability of such a deviation occurring becomes smaller, which is reflected by the decreasing upper bound.}
Now, the finite sample estimation of $\Delta L(s)$ and $\mathbb{E}_{\mu_\pi }\Delta L(s)$ are connected with $\mathbb{E}_{\mu_\pi^T}\Delta L(s)$ respectively by Lemmas~\ref{lemma:finite time estimation bound} and \ref{lemma:initial bound}, we will unify them to derive the desired probabilistic stability guarantee.

\begin{theorem}\label{theorem:finite sample bound}
{Assume that Assumptions~\ref{assumption: stationary distribution assumption}-\ref{assumption:uniform ergodic} hold for system \eqref{intro:system}.} If there exists a function $L:\mathcal{S}\rightarrow \mathbb{R}_{+}$\ and positive constants $\alpha _{1}$ and $\alpha _{2}$, such that (\ref{Theorem 2-1}) {holds}, and for a number of $M$ trajectories with $T$ time steps there exists a positive constants $\epsilon$ and $\alpha_{3}$ such that 
\begin{equation}
T\geq \left(\frac{b_1}{\epsilon}\right)^{\frac{1}{1-\gamma}}
\label{eq:minimum T}
\end{equation}
and
\begin{equation}\label{eq:finite sample constraint}
    \frac{1}{MT}\sum_{t=1}^{T}\sum_{m=1}^{M}\Delta L(s_{t,m})\leq -\epsilon,
\end{equation}
then the system \eqref{intro:system} controled by $\pi$ is mean square stable with a probability of at least
\begin{equation}
1-\exp \left(-2M \left(\frac{\epsilon-2T^{\gamma-1}b_1}{b_2}\right)^2\right)
 \label{stability bound}
 \end{equation}
where $b_1=(\alpha_3+\alpha_2)\overline{c}$ and $b_2=(2\alpha_2+\alpha_3)\overline{c}$.
If the desired confidence of stability guarantee is at least $\delta$, the associated overall sample complexity is at least $\mathcal{O}(\log(\frac{1}{1-\delta}))$.To achieve a confidence $\delta$, $M$ and $T$ have to satisfy $M (\epsilon-T^{\gamma-1}b_1)^2\geq 2\overline{c}^2(2\alpha_2+\alpha_3)^2\log(\frac{1}{1-\delta})$.
\end{theorem}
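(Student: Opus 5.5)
The plan is to combine Lemma~\ref{lemma:finite time estimation bound} and Lemma~\ref{lemma:initial bound} into a single tail bound, and then feed the resulting infinite-sample inequality into Lemma~\ref{them:MSS}. By Lemma~\ref{them:MSS}, together with Assumptions~\ref{assumption: stationary distribution assumption}--\ref{initial state assumption} and (\ref{Theorem 2-1}), mean square stability follows once we know $\mathbb{E}_{\mu_\pi}\Delta L(s)\leq 0$, which is exactly (\ref{Theorem 2-2}). So it suffices to show that the event $\{\mathbb{E}_{\mu_\pi}\Delta L(s)>0\}$ is incompatible with the observed inequality (\ref{eq:finite sample constraint}), except with probability at most the exponential term in (\ref{stability bound}).

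First, by Lemma~\ref{lemma:finite time estimation bound}, $\mathbb{E}_{\mu_\pi^T}\Delta L(s)\geq \mathbb{E}_{\mu_\pi}\Delta L(s)-2b_1T^{\gamma-1}$. Suppose the system fails (\ref{Theorem 2-2}), i.e.\ $\mathbb{E}_{\mu_\pi}\Delta L(s)>0$. Then $\mathbb{E}_{\mu_\pi^T}\Delta L(s)>-2b_1T^{\gamma-1}$. If in addition the empirical average satisfies (\ref{eq:finite sample constraint}), the empirical average undershoots $\mathbb{E}_{\mu_\pi^T}\Delta L(s)$ by at least $\beta:=\epsilon-2b_1T^{\gamma-1}$, which is exactly the event in (\ref{eq: lemma 2 eq 1}). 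This $\beta$ must be nonnegative for Lemma~\ref{lemma:initial bound} to apply. I will use the condition (\ref{eq:minimum T}), which gives $T^{\gamma-1}b_1\leq\epsilon$, and expect to need to tighten it (or absorb the factor $2$) so that $\beta\geq 0$. Otherwise the bound should be read as vacuous. Invoking Lemma~\ref{lemma:initial bound} with this $\beta$ and $b_2=(2\alpha_2+\alpha_3)\overline{c}$ bounds the probability of this bad event by $\exp(-2M\beta^2/b_2^2)$. The complement gives the stated confidence (\ref{stability bound}).

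Second, for the sample complexity, I will set $\exp(-2M\beta^2/b_2^2)\leq 1-\delta$ and solve, obtaining $M\beta^2\geq \tfrac{1}{2}b_2^2\log\frac{1}{1-\delta}$. This reproduces the stated condition on $M$ and $T$ up to the constant conventions in the statement. It also shows that the number of trajectories scales as $\mathcal{O}(\log\frac{1}{1-\delta})$ for fixed $\epsilon$, once $T$ is large enough that $\beta$ is bounded away from zero.

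The main obstacle is interpretive rather than computational. The probability in Lemma~\ref{lemma:initial bound} is over the random samples, while stability is a deterministic property of $\pi$. The conclusion must therefore be phrased as follows: the probability that the samples satisfy (\ref{eq:finite sample constraint}) while (\ref{Theorem 2-2}) fails is at most the exponential term. Equivalently, the test certifies stability with the stated confidence. I will state this explicitly and keep careful track of the $\beta\geq 0$ requirement, which is linked to (\ref{eq:minimum T}).
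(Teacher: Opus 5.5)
Your proposal is correct and is essentially the paper's proof: the paper also bounds the bias term by $\omega=2b_1T^{\gamma-1}$ via Lemma~\ref{lemma:finite time estimation bound} and applies Lemma~\ref{lemma:initial bound} with $\beta=\epsilon-\omega$. The paper writes this as a chain of inequalities on $\mathbb{P}(\mathbb{E}_{\mu_\pi}\Delta L(s)\leq 0)$, whereas you bound, more rigorously, the probability of the bad event that (\ref{eq:finite sample constraint}) holds while (\ref{Theorem 2-2}) fails. Your caveats are genuine and the paper does not address them: (\ref{eq:minimum T}) only yields $b_1T^{\gamma-1}\leq\epsilon$, so $\beta\geq 0$ needs $T\geq(2b_1/\epsilon)^{1/(1-\gamma)}$, and the condition actually implied by (\ref{stability bound}) is $M(\epsilon-2b_1T^{\gamma-1})^2\geq\tfrac{1}{2}b_2^2\log\frac{1}{1-\delta}$, which differs from the stated one, so say this explicitly instead of appealing to ``constant conventions.''
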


\begin{proof}
We will use $\Delta L_{M,T}(s)$ to represent the sample average on the left-hand side of (\ref{eq:finite sample constraint}) for improved readability. Essentially, $\Delta L_{M,T}(s)$ is the unbiased estimation of $\mathbb{E}_{\mu_\pi^T}\Delta L(s)$, of which the probabilistic bound is derived in Lemma~\ref{lemma:initial bound}. However, in Lemma~\ref{lemma:finite time estimation bound} we show that $\mathbb{E}_{\mu_\pi^T}\Delta L(s)$ is not an unbiased estimation of $\mathbb{E}_{\mu_\pi}\Delta L(s)$ since $T$ is finite.  Now based on these lemmas, we need to investigate the effect of estimating $\mathbb{E}_{\mu_\pi}\Delta L(s)$ in (\ref{Theorem 2-2}) with $\Delta L_{M,T}(s)$.

Let $\omega$ denote the bound in (\ref{eq:delta L bound}), $\omega=2\overline{c}(\alpha_3+\alpha_2)T^{\gamma-1}$. According to (\ref{eq:finite sample constraint}), $L_{M,T}(s)+\epsilon$ is semi-negative definite, then
\begin{align}
    & \mathbb{P}(\mathbb{E}_{\mu_\pi} \Delta L(s)\leq0)\notag
    \\
    \geq& \mathbb{P}\left(\mathbb{E}_{\mu_\pi} \Delta L(s)<\Delta L_{M,T}(s)+\epsilon\right)\\
    =&\mathbb{P}\left( \left(\Delta L_{M,T}(s)-\mathbb{E}_{\mu_\pi^T}\Delta L(s) \right) \right. \notag\\
    & \left. \ \ + \left(\mathbb{E}_{\mu_\pi^T}\Delta L(s)-\mathbb{E}_{\mu_\pi} \Delta L(s)\right) >-\epsilon\right)\\
    \geq&\mathbb{P}\left(\Delta L_{M,T}(s)-\mathbb{E}_{\mu_\pi^T}\Delta L(s) - \omega>-\epsilon\right)\\
    =&1- \mathbb{P}\left(\Delta L_{M,T}(s)-\mathbb{E}_{\mu_\pi^T}\Delta L(s)\leq-(\epsilon-\omega) \right)
\end{align}
Then we apply the probabilistic bound derived in Lemma~\ref{lemma:initial bound},
by substituting $\beta$ in (\ref{eq:finite sample bound eq}) by 
\begin{equation}\label{eq: proof 1-1}
(\epsilon-\omega) =  \epsilon-\frac{1}{T}\sum_{t=1}^T\left \Vert\mu_\pi(s)- P(s|\rho,\pi,t)\right\Vert_1b_1
\end{equation}
It follows that
\begin{align}
&\mathbb{P}(\mathbb{E}_{\mu_{\pi}}\Delta L(s)\leq 0)\notag\\
\geq&1-\mathbb{P}(\Delta L_{M,T}(s)-\mathbb{E}_{\mu_\pi^T}\Delta L(s)\leq-(\epsilon-\omega))\\
\geq&1-\exp \left(- \frac{2M(\epsilon-\omega)^2}{(2\alpha_2+\alpha_3)^2\overline{c}^2}\right)\label{eq:proof 1-2}
\end{align}
Substitute \eqref{eq: proof 1-1} into \eqref{eq:proof 1-2} and utilize Assumption~\ref{assumption:uniform ergodic}, one has
\begin{equation}
\begin{aligned}
&\mathbb{P}(\mathbb{E}_{\mu_{\pi}}\Delta L(s)\leq 0)\\
\geq &1-\exp \left(- \frac{2M(\epsilon-T^{\gamma-1}2b_1)^2}{(2\alpha_2+\alpha_3)^2\overline{c}^2}\right)
\end{aligned}
\end{equation}

The desired bound for guaranteeing mean square stability in probability can be obtained in \eqref{stability bound}.
\end{proof}

\section{Reinforcement learning with probabilistic stability guarantee}\label{sec:6}

Based on the theoretical results in the previous section, one can judge whether the system is stable given several finite-length trajectories by evaluating (\ref{eq:finite sample constraint}). The theoretical results in the stability theorems using Lyapunov's method do not, however, give a prescription for determining the Lyapunov function and controller. To translate the theorem into practical algorithms, the high-level plan is to parameterize $L(s)$ with (\ref{eq:lyapunov network}) and the controller $\pi(a|s)$ with an arbitrary NN $\pi_{\theta}(a|s)$. Then $\phi$ and $\theta$ will be updated separately and iteratively using stochastic gradient descent algorithms until system~(\ref{intro:system}) is stabilized such that (\ref{eq:finite sample constraint}) is satisfied. We use $\tau$ to
denote a trajectory ($\tau = \{ s_1,a_1,s_2...s_T\}$), and $\tau \sim \pi$ is the shorthand for indicating that the distribution over trajectories depends on $\pi$: $P(\tau) = \rho(s_1)\prod^T_{t=1}P(s_{t+1}|s_t,a_t)\pi(a_t|s_t)$.

\subsection{Policy gradient}
\label{sec:Policy Gradient}

In this subsection, we will focus on how to learn the controller in an iterative manner, repeatedly estimating the policy gradient of the target function with samples and updating $\theta$ through stochastic gradient descent. $\Delta L(s)$ is temporarily assumed to be given, i.e., $\phi$ are fixed.  
In Section~\ref{sec:Lyapunov Function}, we will show how the Lyapunov function is selected and learned after $\theta$ is updated.

Since the left-hand side of (\ref{eq:finite sample constraint}) is the unbiased estimate of $\Delta L(s)$ on $\mu_\pi^T$, the problem can be formulated by
\begin{align}
\begin{aligned}
    \text{find } &\theta
    \text{, s.t. } \mathbb{E}_{\mu_{\pi_\theta}^T}\Delta L(s)\leq  -\epsilon 
\end{aligned}\label{eq:target function}
\end{align}
A straightforward way of solving the constrained optimization problem above would be the first-order method \cite{bertsekas2014constrained} (Chapter 4), also known as gradient descent. At each update step, the gradient of (\ref{eq:target function}) with respect to $\theta$ is estimated with samples, and $\theta$ updates a small step in the opposite direction of the estimated gradient vector.
The gradient of (\ref{eq:target function}) with respect to $\theta$ is derived in the following theorem.

\begin{theorem}\label{theorem:policy gradient}
The gradient of Lyapunov condition (\ref{eq:target function}) is given by the following,

\begin{equation}
\begin{aligned}
&\tauE{\pi_\theta}{\frac{1}{T}\sum_{t=1}^{T}\nabla_\theta\log\pi_\theta(a_t|s_t){l(\tau, t)}} 
\end{aligned} \label{eq:policy gradient}
\end{equation}
{where
\begin{equation}
l(\tau, t) = \sum_{j=t+1}^{T} c(s_j) +L(s_{T+1})
\end{equation}}
\end{theorem}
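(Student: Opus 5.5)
The plan is to rewrite the Lyapunov condition (\ref{eq:target function}) as an expectation over whole trajectories, telescope the $L$-terms, and then apply the likelihood-ratio (log-derivative) trick together with a causality argument.

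\textbf{Step 1 (trajectory form).} By definition of $\mu_{\pi_\theta}^T$ and the recursion for $P(s|\rho,\pi,t)$, as in the proof of Lemma~\ref{lemma:initial bound}, we have
\begin{equation*}
J(\theta):=\mathbb{E}_{\mu_{\pi_\theta}^T}\Delta L(s)=\tauE{\pi_\theta}{\frac{1}{T}\sum_{t=1}^{T}\big(L(s_{t+1})-L(s_t)+\alpha_3 c(s_t)\big)}.
\end{equation*}
This holds because the marginal of $s_t$ under $P(\tau)$ is $P(s|\rho,\pi_\theta,t)$, and $\mathbb{E}_{s'\sim P_\pi}L(s')$ is exactly the conditional expectation of $L(s_{t+1})$ given $s_t$. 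The $L$-terms then telescope, so that
\begin{equation*}
J(\theta)=\frac{1}{T}\tauE{\pi_\theta}{f(\tau)},\qquad f(\tau)=L(s_{T+1})-L(s_1)+\alpha_3\sum_{t=1}^{T}c(s_t).
\end{equation*}
Here $f$ does not depend on $\theta$ once $\phi$ is fixed, as stipulated in Section~\ref{sec:Policy Gradient}.

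\textbf{Step 2 (log-derivative trick).} I will write $J(\theta)=\frac{1}{T}\int P_\theta(\tau)f(\tau)\,\mathrm{d}\tau$ and differentiate under the integral. This interchange is justified because $|f|\le(\alpha_2+\alpha_3 T)\overline{c}$ by (\ref{Theorem 2-1}) and the clipping of $c$, $\mathcal{A}$ is finite, and $\pi_\theta$ is assumed differentiable with bounded score, so dominated convergence applies. Since $\rho$ and $P(s_{t+1}|s_t,a_t)$ do not depend on $\theta$, we get
\begin{equation*}
\nabla_\theta\log P_\theta(\tau)=\sum_{t=1}^T\nabla_\theta\log\pi_\theta(a_t|s_t),
\end{equation*}
and therefore
\begin{equation*}
\nabla_\theta J=\tauE{\pi_\theta}{\frac{1}{T}\sum_{t=1}^T\nabla_\theta\log\pi_\theta(a_t|s_t)f(\tau)}.
\end{equation*}

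\textbf{Step 3 (causality / removing past terms).} For each $t$, I split $f$ into a part depending only on $(s_1,\dots,s_t)$, namely $-L(s_1)+\alpha_3\sum_{j\le t}c(s_j)$, and the remainder $\alpha_3\sum_{j=t+1}^{T}c(s_j)+L(s_{T+1})$. For any function $g(s_1,\dots,s_t)$, conditioning on the history up to $s_t$ gives
\begin{equation*}
\mathbb{E}\big[g\,\nabla_\theta\log\pi_\theta(a_t|s_t)\big]=\mathbb{E}\Big[g\sum_{a\in\mathcal{A}}\nabla_\theta\pi_\theta(a|s_t)\Big]=\mathbb{E}\Big[g\,\nabla_\theta\sum_{a\in\mathcal{A}}\pi_\theta(a|s_t)\Big]=0,
\end{equation*}
since $\sum_{a}\pi_\theta(a|s_t)=1$. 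Dropping these zero-mean terms yields (\ref{eq:policy gradient}) with $l(\tau,t)=\alpha_3\sum_{j=t+1}^{T}c(s_j)+L(s_{T+1})$. The statement's form without the factor $\alpha_3$ corresponds to the normalization $\alpha_3=1$, or equivalently to absorbing $\alpha_3$ into the scale of $c$; I would note this explicitly.

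\textbf{Main obstacle.} The only delicate points are bookkeeping. First, Step 1 requires checking that the one-step-ahead expectation in $\Delta L$ matches the trajectory term $L(s_{t+1})$, including the final state $s_{T+1}$; this means $\tau$ must include $a_T$ and $s_{T+1}$, consistent with the product up to $T$ in $P(\tau)$. Second, Step 3 requires the conditional-expectation argument for the causality step, which holds exactly because $\mathcal{A}$ is finite and $\sum_a\pi_\theta(a|s)=1$ for every $s$.
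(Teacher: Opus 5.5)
Your proof is correct. It reaches the same expression as the paper by a somewhat different route.

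The paper also starts from the telescoped form. Instead of differentiating the full trajectory density, it differentiates each marginal $P(s_t|\rho,\pi_\theta)$ separately through the one-step recursion
\[
\int_{\mathcal{S}}\nabla_\theta P(s_t|\rho,\pi_\theta)f(s_t)=\mathbb{E}_{\tau}\big[f(s_t)\nabla_\theta\log\pi_\theta(a_{t-1}|s_{t-1})\big]+\int_{\mathcal{S}}\nabla_\theta P(s_{t-1}|\rho,\pi_\theta)\,\mathbb{E}_{P_{\pi_\theta}(s_t|s_{t-1})}f(s_t).
\]
Iterating this down to $t=1$, where $\nabla_\theta\rho=0$, gives $\mathbb{E}_{\tau}\big[f(s_t)\sum_{i=1}^{t-1}\nabla_\theta\log\pi_\theta(a_i|s_i)\big]$. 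Causality is therefore built in: each $c(s_t)$ is paired only with scores of earlier actions, $L(s_{T+1})$ with all $T$ of them, and $L(s_1)$ drops out because $\rho$ does not depend on $\theta$. Exchanging the order of summation then finishes the argument.

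You instead apply the log-derivative trick once to $P_\theta(\tau)$. You then discard the non-causal products using $\mathbb{E}\big[\nabla_\theta\log\pi_\theta(a_t|s_t)\mid s_1,a_1,\dots,s_t\big]=0$.

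Each route has its advantages:
\begin{itemize}
\item Yours is the standard reward-to-go derivation. It makes immediately clear why the baseline $b(s_t)$ in (\ref{eq:policy gradient with baseline}) leaves the estimate unbiased, since the same zero-mean identity is at work.
\item You also address the interchange of $\nabla_\theta$ and the integral, which the paper never discusses. This costs you a bounded-score regularity assumption that the paper does not state.
\item The paper's version stays closer to the marginal-distribution formulation of $\mu^T_{\pi_\theta}$ used throughout Section~\ref{sec:5}.
\end{itemize}

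Your remark on $\alpha_3$ is accurate. In the paper's own penultimate line, $\frac{\alpha_3}{T}$ multiplies $\sum_{t=t'+1}^{T}c(s_t)$, and the factor is silently dropped in the final equality. The stated $l(\tau,t)$ is therefore exact only for $\alpha_3=1$; in general it should read $\alpha_3\sum_{j=t+1}^{T}c(s_j)+L(s_{T+1})$, as you derived.
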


\begin{proof}
In this section, we will show that (\ref{eq:policy gradient}) holds.
To proceed, we introduce $P(s_t|\rho,\pi_\theta):= P(s|\rho,\pi_\theta,t)$ to make the derivation more clear. First, the following equalities hold,
\begin{equation}
\begin{aligned}
    &\mathbb{E}_{\mu_{\pi_\theta}^T}\Delta L(s)+\epsilon\\
    =&\frac{1}{T}\mathbb{E}_{P(\cdot|\rho,\pi_\theta)}L(s_{T+1}) -\frac{1}{T}\mathbb{E}_{\rho}L(s_1)\\
    &+ \frac{\alpha_3}{T}\sum_{t=1}^T\mathbb{E}_{P(\cdot|\rho,\pi_\theta)}c(s_t)+\epsilon
\end{aligned}
\end{equation}

Take the derivative of the above equation with respect to $\theta$, one has 
\begin{equation}\label{proof:gradient-1}
\begin{aligned}
&\nabla_\theta\mathbb{E}_{\mu_{\pi_\theta}^T}\Delta L(s)\\ 
=& \frac{1}{T}\int_{\mathcal{S}}\nabla_\theta P(s_{T+1}|\rho,\pi_\theta)L(s_{T+1})  \\ &+\frac{\alpha_3}{T}\sum_{t=1}^T\int_{\mathcal{S}}\nabla_\theta P(s_t|\rho,\pi_\theta)c(s_t)
\end{aligned}
\end{equation}

For any $t\in\mathbb{Z}_{[2,+\infty)}$ and function $f(s_t)$,
\begin{align}
&\int_{\mathcal{S}}\nabla_\theta P(s_t|\rho,\pi_\theta)f(s_t) \notag \\
=&\mathbb{E}_{ P(s_{t-1}|\rho,\pi_\theta)}\Bigg(\sum_{a_{t-1}} \nabla_\theta\pi_\theta(a_{t-1}|s_{t-1})\notag\\&\times\int_{\mathcal{S}}P(s_t|s_{t-1},a_{t-1})f(s_t)  \notag\\
&+ \int_{\mathcal{S}}\nabla_\theta P(s_{t-1}|\rho,\pi_\theta)\mathbb{E}_{P_{\pi_\theta}(s_t|s_{t-1})}f(s_t)\Bigg)\\
=&\tauE{\pi_\theta}{f(s_t)\nabla_\theta\log\pi_\theta(a_{t-1}|s_{t-1})}\notag\\
&+\int_{\mathcal{S}}\nabla_\theta P(s_{t-1}|\rho,\pi_\theta)\mathbb{E}_{P_{\pi_\theta}(s_t|s_{t-1})}f(s_t)
\end{align}
Iterate the above relation to $t=1$, one has
\begin{equation}
\begin{aligned}
    &\int_{\mathcal{S}}\nabla_\theta P(s_t|\rho,\pi_\theta)f(s_t)\\ 
    =&\tauE{\pi_\theta}{f(s_t)\sum_{i=1}^{t-1} \nabla_\theta\log\pi_\theta(a_{i}|s_{i})}
\end{aligned}
\end{equation}
Substitute the above equation into (\ref{proof:gradient-1}), 
\begin{equation}
\begin{aligned}
&\nabla_\theta\mathbb{E}_{\mu_{\pi_\theta}^T}\Delta L(s)\\ 
=& \frac{1}{T}\tauE{\pi_\theta}{\sum_{t=1}^{T} \nabla_\theta\log\pi_\theta(a_{t}|s_{t})L(s_{T+1})}\\&+ \frac{\alpha_3}{T}\sum_{t=1}^{T}\tauE{\pi_\theta}{c(s_t)\sum_{t'=1}^{t-1}\nabla_\theta\log\pi_\theta(a_{t'}|s_{t'})}\label{proof:gradient-2}
\end{aligned}
\end{equation}

For the term in (\ref{proof:gradient-2}), we move the first sum into the expectation and exchange its order with the second sum, then
\begin{align}
    &\nabla_\theta\mathbb{E}_{\mu_{\pi_\theta}^T}\Delta L(s)\notag\\ 
=&\frac{1}{T}\tauE{\pi_\theta}{\sum_{t=1}^{T} \nabla_\theta\log\pi_\theta(a_{t}|s_{t})L(s_{T+1})} \notag\\
&+ \frac{\alpha_3}{T}\tauE{\pi_\theta}{\sum_{t'=1}^{T-1}\nabla_\theta\log\pi_\theta(a_{t'}|s_{t'})\sum_{t=t'+1}^{T}c(s_t)}\\
=&\frac{1}{T}\tauE{\pi_\theta}{\sum_{t=1}^{T} \nabla_\theta\log\pi_\theta(a_{t}|s_{t})l(\tau, t)}
\end{align}
which concludes the proof.
\end{proof}

Surprisingly, we found that the policy gradient derived in Theorem~\ref{theorem:policy gradient} is very similar to that used in the vanilla policy gradient method, i.e., REINFORCE \cite{sutton2018reinforcement}, in the classic RL paradigm. In RL, the objective is to minimize a certain objective function $J_\theta = \tauE{\pi_\theta}{ \sum_{t=1}^{T} c(s_t)}\label{eq:RL objective}$ and the policy gradient $\nabla_\theta J_{\theta}$ is given as follows:
\begin{align}
\tauE{\pi_\theta}{\sum_{t=1}^{T}\nabla_\theta\log\pi_\theta(a_t|s_t)\left(\sum_{j=t+1}^{T} c(s_j)\right)}
\label{eq:REINFORCE policy gradient}
\end{align}

Essentially, despite the scale of $\frac{1}{T}$, (\ref{eq:policy gradient}) and (\ref{eq:REINFORCE policy gradient}) are equivalent if one chooses $c(s)$ to be the Lyapunov function and sets $\alpha_3 =1$.
This implies that given the system (\ref{intro:system}), REINFORCE actually updates the policy towards a solution that can stabilize the system, although it is now aware of under what conditions the solution is guaranteed to be stabilizing. In particular, we can view REINFORCE as a special case of our result, since we prove that many other choices of $\alpha_3$ and Lyapunov functions are admissible to find a stabilizing solution. The default setting of $c(s)$ as $L(s)$ and $\alpha_3=1$ in REINFORCE may not satisfy (\ref{eq:finite sample constraint}), while we revealed that many other feasible combinations of $L$ and $\alpha_3$ potentially exist.

In light of this connection with REINFORCE, it is natural to propose a similar learning procedure based on Theorem~\ref{theorem:policy gradient}, referred to as Lyapunov-based REINFORCE (L-REINFORCE). L-REINFORCE updates the policy with the policy gradient proposed in (\ref{eq:policy gradient}). Instead of minimizing the sum of costs, L-REINFORCE aims to learn a stochastic policy $\pi_\theta(a|s)$ such that the conditions in Theorem~\ref{theorem:finite sample bound} are satisfied.

Furthermore, to reduce the variance in the estimation of (\ref{eq:policy gradient}) and speed up learning, it is desirable to introduce a baseline function $b(s)$ in (\ref{eq:policy gradient}), and the estimation is still unbiased \cite{sutton2018reinforcement}:
\begin{align}
&\tauE{\pi_\theta}{\frac{1}{T}\sum_{t=1}^{T}\nabla_\theta\log\pi_\theta(a_t|s_t) \Big({l(\tau, t)}-b(s_t)\Big)} \label{eq:policy gradient with baseline}
\end{align}

\subsection{Lyapunov function}
\label{sec:Lyapunov Function}
The Lyapunov function is parameterized using a DNN $f_\phi$ in (\ref{eq:lyapunov network}). 
In this paper, we choose the value function to be the update target for $f$ as explored in \cite{chow2019lyapunov} and leave other possible choices for future work. 

In L-REINFORCE, there are two networks: the policy network $\mu_\theta(s)$ and the Lyapunov network $f_\phi(s)$. For the policy network, we use a fully connected MLP with one hidden layer of 32 units activated by ReLU, outputting the logits for softmax. Its expression is as follows, 
\begin{equation}
    \pi(a|s)=\frac{exp(\mu_\theta^a(s))}{\sum_iexp(\mu_\theta^i(s))}
\end{equation}
where $\mu_\theta^i(s)$ denotes the $i_\text{th}$ dimension of $\mu_\theta(s)$.
$f_\phi(s)$ is a fully connected MLP with two hidden layers activated by ReLU. $f$ is trained to approximate the value function, $V(s_t)= \sum_{j=t}^\infty \gamma^{j-t} \mathbb{E}_{s_j}\left(c(s_{j})-\overline{b}\right)$, where $\overline{b}$ is a designed positive constant. 
The parameter of $f$ is updated through soft replacement, i.e. $\phi_{k+1} \leftarrow (1-\tau)\phi_k + \tau (\phi_k - \alpha\nabla_\phi e^2)$, where $e$ is the temporal difference. 

To wrap up, the L-REINFORCE algorithm is summarized in Algorithm~\ref{algo:L-REINFORCE}. 

\begin{algorithm}[H]
   \caption{L-REINFORCE}
   \label{algo:L-REINFORCE}
\begin{algorithmic}
   \REPEAT
   \FOR{1, 2, \dots, M}
   \STATE Collect transition pairs following $\pi_\theta$ for $T$ steps
   \ENDFOR
   \STATE 
   $
   \theta \leftarrow \theta - \alpha \nabla_\theta\mathbb{E}_{\mu_{\pi_\theta}^T}\Delta L(s)
   $
   \STATE Update $\phi$ of Lyapunov function/value network to approximate the designed target
   \UNTIL{There exists $\alpha_3$ such that $\mathbb{E}_{\mu_{\pi_\theta}^T}\Delta L(s)<-\epsilon$}
\end{algorithmic}

\end{algorithm}

\begin{figure}[htb] 
\centering
\includegraphics[width=0.48\textwidth]{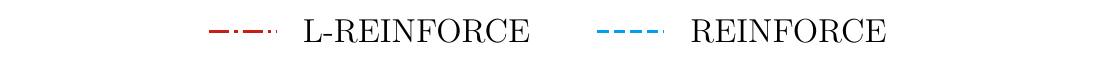}
\centering
\includegraphics[width=0.48\textwidth]{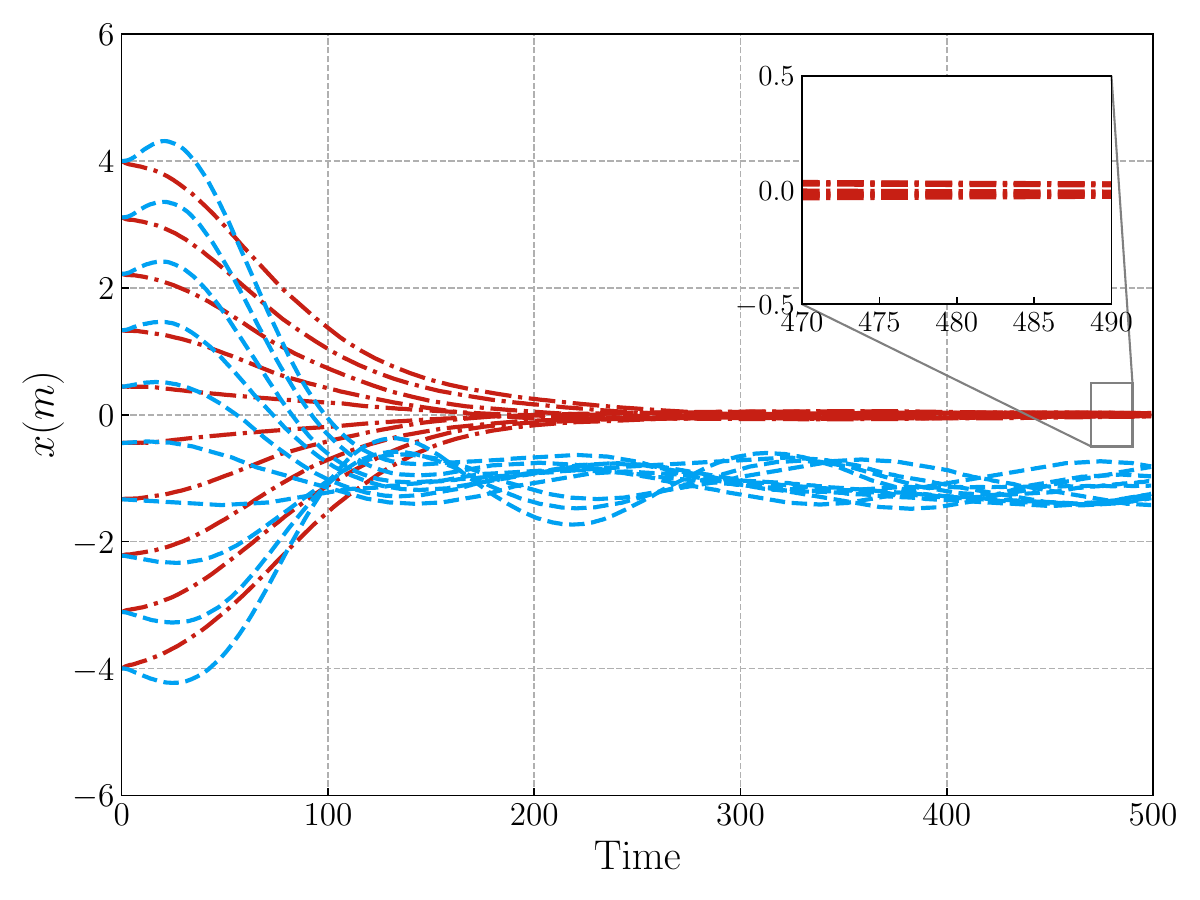}
\centering
\includegraphics[width=0.48\textwidth]{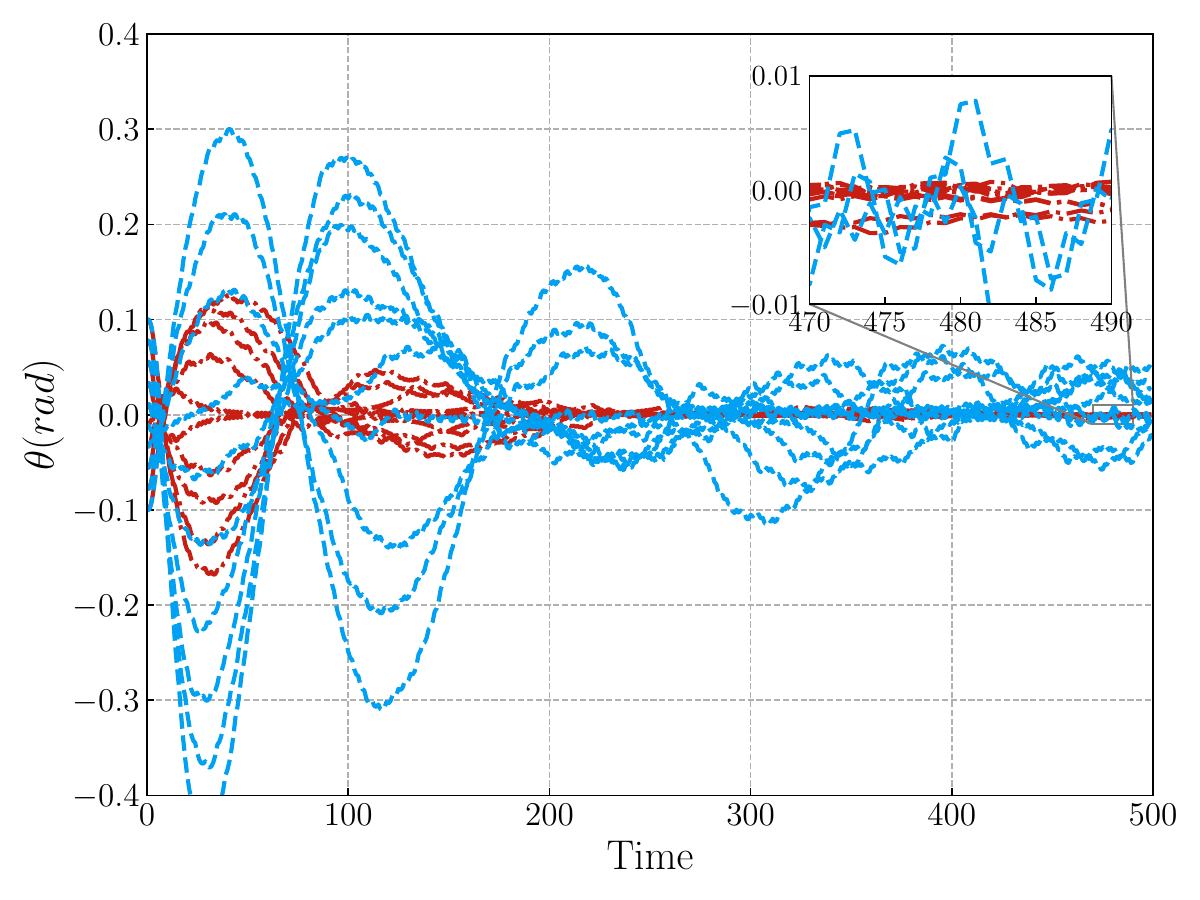}

\caption{{State trajectories of the controllers trained by L-REINFORCE and REINFORCE. The X-axis denotes the time steps; and the Y-axis denotes the position $x$ in meters and the angle $\theta$ in radians, respectively. Zoom-in views are displayed inside the plots.} }
\label{fig:phase trajectory}
\end{figure}

\section{Simulation results}\label{sec:experiment}

In this section, we evaluate the performance of the proposed framework. 
To demonstrate the effectiveness of the proposed method, we consider the stabilization task of a simulated Cartpole \cite{brockman2016openai}. The goal is to stabilize the pole vertically at the position $x=0$. We adopt REINFORCE \cite{williams1992simple} as the baseline method for comparison. L-REINFORCE and REINFORCE select the action, the horizontal force $F\in\{-10, 0, 10\}(N)$ applied to the cart.
The hyperparameters of L-REINFORCE are presented in Table~\ref{table:hyperparameters}. { In Table~\ref{table:hyperparameters}, the structures of the neural networks in L-REINFORCE are represented by tuples. $(64, 16, 1)$ represents that the Lyapunov network is composed of three consecutive layers, which contain 64 neurons, 16 neurons, and 1 neuron, respectively. The output of this network is a scalar term.  $(32, 3)$ represents that the policy network is composed of two consecutive layers, which contain 32 neurons and 3 neurons, respectively. }

The initial horizontal position of the cart $x$ is uniformly distributed on $[-1,1](m)$, and other states are uniformly distributed on $[-0.05,0.05]$. {During training, the episode ends if the angle $\theta$ exceeds the thresholds $\pm 0.35 \text{ rads}$ or the total time step exceeds $500$. This condition is not applied during the post-training performance evaluation.} {During training, REINFORCE uses the 2-norm of the state as the stage cost. The learning rate of REINFORCE is determined to be $1\times10^{-2}$ after tuning. During training, REINFORCE and L-REINFORCE adopt the same values for the number of trajectories $M$ and time steps $T$ for each iteration of policy update. }

It is important to note that the stability of a system can not be judged from the cumulative cost (or return) because stability is an inherent property of the system dynamics, and a stable system may not be optimal in terms of the return.
Therefore in Fig.~\ref{fig:phase trajectory}, we show the transient system behavior under the learned policies of L-REINFORCE and REINFORCE. The systems are initialized at different positions in the space, and their subsequent behaviors are observed.
As shown in Fig.~\ref{fig:phase trajectory}, starting from different initial states, L-REINFORCE can efficiently stabilize the system.  {On the other hand, the cartpole controlled by REINFORCE is not stabilized in the x position and oscillates in the angular position. }

To let the readers have an intuitive sense of the probabilistic stability bound, the bound for Cartpole under the control of the L-REINFORCE agent is shown in Fig.~\ref{fig:sharp-1}. As shown in Fig.~\ref{fig:sharp-1}, the probability of stability increases sharply as the minimum $T$ requirement (\ref{eq:minimum T}) is satisfied. Increasing $M$ and $T$ are both helpful for raising the confidence of stability guarantee. The probabilistic stability guarantee measures the reliability of learned policies. By tuning the hyperparameters such as $M$, $T$, and $\alpha_3$, one can achieve the confidence of stability guarantee according to the real needs.

\begin{figure}[htb] 

\centering

\includegraphics[width=0.48\textwidth]{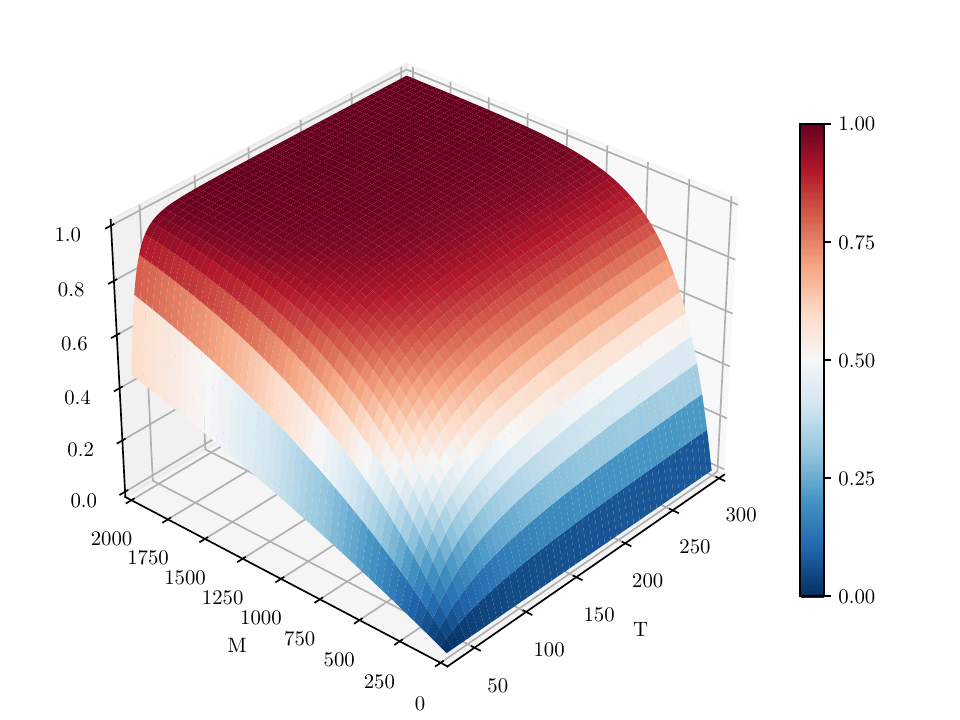}

\caption{Visualization of the probabilistic stability bound. The X-axis indicates the length of trajectories $T$ and Y-axis indicates the number of episodes $M$. The Z-axis indicates the probability of stability and the values are colored differently according to the color bar. }
\label{fig:sharp-1}
\end{figure}

{
\begin{remark}
    The choice of the clip value $\bar{c}$ may influence both controller performance and sample complexity.  
    Specifically, 1) when $\Vert s\Vert>\bar{c}$, the cost saturates and provides limited gradient information for training. A larger $\bar{c}$ gives the controller more informative feedback in these regions. 2) On the other hand, a larger $\bar{c}$ loosens the confidence bound in \eqref{stability bound}. Consequently, more data is required to achieve the same confidence level. Therefore, $\bar{c}$ indeed acts as a hyperparameter, and its choice reflects a trade-off between informative controller training and the tightness of the probabilistic stability bound. 
\end{remark}
}
\section{Conclusion and discussion}

In this paper, we have investigated the finite sample-based stability analysis and control of stochastic systems characterized by MDP in a model-free manner. Instead of verifying energy decreasing point-wisely on the state space, we proposed a stability theorem where only one sampling-based inequality is to be checked. Furthermore, we showed that with a finite number of trajectories of finite length, it is possible to guarantee stability with a certain probability, and the probabilistic bound is derived. Finally, we proposed a model-free learning algorithm to learn the controller with a stability guarantee and revealed its connection to REINFORCE. 

In the future, an important direction is to extend the theoretical analysis to more efficient algorithms than REINFORCE. 
{
On the other hand, the notion of stability in Definition 1 follows the description of MSS given in \cite{bolzern2010markov,zhu2015mean,benjelloun2002mean}. In \cite{teel2014stability}, a variety of definitions of stability in stochastic hybrid systems have been reviewed, including exponential stability, Lyapunov stability, and Lagrange stability, etc. 
We believe the extension of the proposed method to investigate other types of stability is another promising direction for future research. 
}

\begin{table}[htb]
\caption{Hyperparameters of L-REINFORCE}\label{table:hyperparameters}
\begin{center}
\begin{tabular}{l|c }
\hline
Hyperparameters&Value\\
\hline
Number of trajectories ($M$)& 20\\
Time steps ($T$)& 250\\
learning rate ($\alpha$) & 1e-2\\
Soft replacement ($\tau$) &0.005\\
$\alpha_3$ in (\ref{eq:finite sample constraint}) &0.005\\
$\epsilon$ in (\ref{eq:finite sample constraint}) & 1e-4\\
$\sigma$ in (\ref{eq:lyapunov network}) & 0.01\\
Upper bound of norm ($\overline{c}$) & 1000 \\
Discount($\gamma$) & 0.995 \\
Bias ($\overline{b}$) & 10 \\
Structure of Lyapunov network {$f_\phi(s)$} &(64,16,1)\\
Structure of policy network {$\mu_\theta(s)$} &(32,3)\\
\hline
\end{tabular}
\end{center}

\end{table}


\end{document}